\newtheorem{theorem}{Theorem}[section]
\newtheorem{lemma}[theorem]{Lemma}
\newcommand{\sv}{\textsc{sv}}
\newcommand{\changed}[1]{
  #1
}
\newcommand{\RR}{\mathbb R}
\DeclareMathOperator*{\argmin}{arg\,min}
\DeclareMathOperator*{\Diag}{Diag}
\DeclareMathOperator*{\maximize}{maximize}
\DeclareMathOperator*{\minimize}{minimize}
\definecolor{lightgray}{rgb}{0.9,0.9,0.9}
\definecolor{pastelblue}{RGB}{213,229,255}
\newcolumntype{a}{>{\columncolor{lightgray}}c}
\title[Support vector comparison machines]{Support vector comparison machines}
\author[David Venuto {\it et al.}]{David Venuto}
\address{Mila and McGill University,
Montreal,
         Canada.}
\email{david.venuto@mail.mcgill.ca}
\author{Toby Hocking}
\address{Northern Arizona University,
         Flagstaff,
         USA.}
\email{toby.hocking@nau.com}
\author{Lakjaree Spanurattana}
\address{Tokyo Institute of Technology,
         Tokyo,
         Japan.}
\email{bristolb7777@gmail.com}
\author[D. Venuto, T.D. Hocking, L. Spanurattana, M. Sugiyama]{Masashi Sugiyama}
\address{RIKEN Center for Advanced Intelligence Project
		and The University of Tokyo, 
		Tokyo, Japan}
\email{sugi@k.u-tokyo.ac.jp}
\begin{document}

\begin{abstract}
  In ranking problems, the goal is to learn a ranking function
  $r(\mathbf x)\in\RR$ from labeled pairs $\mathbf x,\mathbf x'$ of
  input points. In this paper, we consider the related comparison
  problem, where the label $y\in\{-1,0,1\}$ indicates which element of
  the pair is better ($y = -1$ or $1$), or if there is no significant difference ($y=0$). We
  cast the learning problem as a margin maximization, and show that it
  can be solved by converting it to a standard SVM. We use simulated
  nonlinear patterns and a real learning to rank sushi data set to
  show that our proposed SVMcompare algorithm outperforms SVMrank when
  there are equality $y=0$ pairs.  In addition, we show that SVMcompare outperforms the ELO rating system when predicting the outcome of chess matches.
\end{abstract}

\newpage

\section{Introduction}

In the supervised learning to rank problem \citep{learning-to-rank}, we are
given labeled pairs of items $\mathbf x,\mathbf x'$, where the label
$y\in\{-1,1\}$ indicates which item in the pair should be ranked
higher. The goal is to learn a ranking function $r(\mathbf x)\in\RR$
which outputs a real-valued rank for each item. In this paper we
consider a related problem in which the expanded label space
$y\in\{-1,0,1\}$ includes the $y=0$ label which indicates that there
should be no rank difference. In this context the goal is to learn a
comparison function $c(\mathbf x, \mathbf x')\in\{-1,0,1\}$.

Comparison data naturally arise from competitive two-player games in which
the space of possible outcomes includes a draw (neither player wins).
In games such as chess, draws are a common result between highly
skilled players \citep{elo_score}. To accurately predict the outcome of such games, it
is thus important to learn a model that can predict a draw. 

Comparison data also results when considering subjective human
evaluations of pairs of items. For example, if each item is a movie, a
person might say that \textit{Les Mis\'erables} is better than
\textit{Star Wars}, and \textit{The Empire Strikes Back} is as good as
\textit{Star Wars}. Another example is rating food items such as
wine, in which a person may prefer one wine to another, but not be
able to perceive a difference between two other wines. In this
context, it is important to use a model which can predict no
difference between two items.

More formally, assume
that we have a training sample of $n$ labeled pairs. For each pair
$i\in\{1,\dots,n\}$ we have input features $\mathbf x_i,\mathbf
x_i'\in\RR^p$, where $p$ is a positive integer, and a label $y_i\in\{-1,0,1\}$ that indicates which
element is better:
\begin{equation}
  \label{eq:z}
  y_i =
  \begin{cases}
    -1 &  r(\mathbf x_i)>r(\mathbf x_i')
    \text{, $\mathbf x_i$ is better than $\mathbf x'_i$},\\
    0 & r(\mathbf x_i) = r(\mathbf x_i')
    \text{, $\mathbf x_i$ is as good as $\mathbf x'_i$},\\
    1 & r(\mathbf x_i)<r(\mathbf x_i')
    \text{, $\mathbf x'_i$ is better than $\mathbf x_i$}.
  \end{cases}
\end{equation}

These data are geometrically represented in the top panel of
Figure~\ref{fig:norm-data}. Pairs with equality labels $y_i=0$ are
represented as line segments, and pairs with inequality labels
$y_i=\{-1,1\}$ are represented as arrows pointing to the item with the
higher rank.

The goal of learning is to find a comparison function $c:\RR^p \times
\RR^p \rightarrow \{-1,0,1\}$ which generalizes to a test set of data,
as measured by the zero-one loss:
\begin{equation}
  \label{eq:min_c}
  \minimize_{c} 
  \sum_{i\in\text{test}}
  I\left[ c(\mathbf x_i, \mathbf x_i')\neq y_i \right],
\end{equation}
where $I$ is the indicator function. If there are no equality $y_i=0$
pairs, then this problem is equivalent to learning to rank with a
pairwise zero-one loss function \citep{learning-to-rank}. Learning to
rank has been extensively studied, resulting in state-of-the-art
algorithms such as SVMrank \citep{ranksvm}. However, we are interested
in learning to compare with equality $y_i=0$ pairs, which to our
knowledge has only been studied by \citet{rank-with-ties} where various paired comparison models were used and trained by gradient boosting.  Paired comparison models for predicting ties included the Bradley-Terry Model \citep{bt}, Thurstone-Mosteller Model \citep{tm} and General Linear Models with a tie calling threshold.  The Bradley-Terry Model, which does not accommodate ties, can be extended to accommodate ties in paired comparisons directly \citep{bt-tie}.  Other works consider a different but somewhat related problem of ordinal regression where our outputs are in $1,\dots, K$ ordered classes to be predicted \citep{NIPS2002_2269}. In this article we propose SVMcompare, a support vector algorithm for these
data.

The notation and organization of this article is as follows. We use
bold uppercase letters for matrices such as $\mathbf X, \mathbf K$,
and bold lowercase letters for their row vectors $\mathbf x_i, \mathbf
k_i$. In Section~\ref{sec:related} we discuss links with related work
on classification and ranking, then in Section~\ref{sec:svm-compare}
we propose a new algorithm: SVMcompare. We show results on 3
illustrative simulated data sets and 2 real by learning to rank a sushi
data set and a chess dataset in Section~\ref{sec:results} and~\ref{sec:chess}. We then discuss future work in
Section~\ref{sec:conclusions}.

\section{Related work}
\label{sec:related}

First we discuss connections with several existing methods, and then
we discuss how ranking algorithms can be applied to the comparison
problem.


\begin{figure}
  \centering
  \input{figure-norm-data}
  \caption{Geometric interpretation. \textbf{Left}: input feature pairs
    $\mathbf x_i,\mathbf x_i'\in\RR^p$ are segments for $y_i=0$ and
    arrows for $y_i\in\{-1,1\}$. The level curves of the ranking
    function $r(\mathbf x)=||\mathbf x||_2^2$ are grey, and
    differences $|r(\mathbf x')-r(\mathbf x)|\leq 1$ are considered
    insignificant ($y_i=0$). \textbf{Middle}: in the enlarged feature
    space, the ranking function is linear: $r(\mathbf x)=\mathbf
    w^\intercal \Phi(\mathbf x)$. \textbf{Right}: two symmetric
    hyperplanes $\mathbf w^\intercal[\Phi(\mathbf x_i')-\Phi(\mathbf
    x_i)]\in\{-1,1\}$ are used to classify the difference vectors.}
  \label{fig:norm-data}
\end{figure}

\subsection{Comparison and ranking problems}

In ranking problems, each training example is a pair of
inputs/features $\mathbf x,\mathbf x'\in\RR^p$, and a corresponding
label/output $y_i\in\{-1,1\}$, which indicates which of the two inputs
should be ranked higher (Table~\ref{tab:related}). In the comparison
problem that we study in this paper, outputs $y\in\{-1,0,1\}$ include
the $y=0$ equality pairs or ties, which indicate that the two inputs
should be ranked equally.  The statistics literature contains many
probabilistic models for paired comparison experiments, some of which
directly model ties \citep{davidson-ties}. Such models are concerned
with accurately ranking a finite number of inputs $x\in\{1,\dots,t\}$,
so are not directly applicable to the real-valued inputs
$\mathbf x\in\RR^p$ we consider in this paper. 

\begin{table}
    \caption{\label{tab:related}Our proposed SVM for comparison is similar to previous SVM algorithms for ranking and binary classification.}
    \begin{tabular}{|c|c|c|}\hline
    Outputs/Inputs & single items $\mathbf x$ & pairs $\mathbf x,\mathbf x'$ \\ \hline
    $y\in\{-1,1\}$ &SVM  & SVMrank   	\\ \hline 
    $y\in\{-1,0,1\}$ & - & \textbf{This work: SVMcompare}\\ \hline
  \end{tabular}
\end{table}

The supervised learning to rank problem has been extensively studied
in the machine learning literature \citep{object-ranking-methods,
  learning-to-rank}, and is similar to the supervised comparison
problem we consider in this paper. There are several Bayesian models
which can be applied to learning to rank, such as TrueSkill
\citep{trueskill} and Glicko \citep{Glicko}, which are generalizations
of the Elo chess rating system \citep{elo_score}. The SVMrank algorithm was
proposed for learning to rank \citep{ranksvm}, and the large-margin
learning formulation we propose in this article is similar. The
difference is that we also consider the case where both inputs are
judged to be equally good ($y_i=0$). A boosting algorithm for this
``ranking with ties'' problem was proposed by \citet{rank-with-ties},
who observed that modeling ties is more effective when there are more
output values.

\changed{Ranking data sets are often described not in terms of labeled
  pairs of inputs $(\mathbf x_i, \mathbf x_i', y_i)$ but instead
  single inputs $\mathbf x_i$ with ordinal labels
  $y_i\in\{1,\dots,k\}$, where $k$ is the number of integral output
  values. Support Vector Ordinal Regression \citep{ordinal} has a
  large-margin learning formulation specifically designed for these
  data. Another approach is to first convert the inputs to a database
  of labeled pairs, and then learn a ranking model such as the
  SVMcompare model we propose in this paper. \citet{sv-survival}
  observed that directly using a regression model gives better
  performance than ranking models for survival data. However, in this
  paper we limit our study to models for labeled pairs of inputs, and
  we focus on answering the question, ``how can we adapt the Support
  Vector Machine to exploit the structure of the equality $y_i=0$
  pairs when they are present?''}


\subsection{SVMrank for comparing}
\label{sec:svmrank}
In this subsection we explain how to apply the existing SVMrank algorithm
to a comparison data set.  The goal of SVMrank is to learn a ranking
function $r:\RR^p \rightarrow \RR$. When $r(\mathbf x)=\mathbf
w^\intercal \mathbf x$ (where $^\top$ denotes the transpose) is linear, the primal problem for some cost
parameter $C\in\RR^+$ (where $\mathbb{R}^+$ is a set of all non-negative real numbers) is the following quadratic program (QP):
\begin{equation}
  \begin{aligned}
    \minimize_{\mathbf w, \mathbf \xi}\ \ & \frac 1 2
    \mathbf w^\intercal \mathbf w
    + C \sum_{i\in \mathcal I_1\cup \mathcal I_{-1}} \xi_i \\
    \text{subject to}\ \ &
    \forall i\in \mathcal I_1\cup \mathcal I_{-1},\ \xi_i \geq 0,\\
    & \text{and }\xi_i \geq 1-\mathbf w^\intercal(\mathbf x_i'-\mathbf
    x_i)y_i,
  \end{aligned}
  \label{eq:svmrank}
\end{equation}
where $\mathcal I_y=\{i\mid y_i=y\}$ are the sets of indices for the different
labels. Note that (\ref{eq:svmrank}) is the same as Optimization
Problem 1 (Ranking SVM), in the paper of \citet{ranksvm}. Note also
that the equality $y_i=0$ pairs are not used in the optimization
problem.

After obtaining a weight vector $\mathbf w\in\RR^p$ by solving SVMrank
(\ref{eq:svmrank}), we get a ranking function $r(\mathbf x)\in\RR$,
but we are not yet able to predict equality $y_i=0$ pairs. To do so,
we extend SVMrank by defining a threshold $\tau\in\RR^+$ and a
thresholding function $t_\tau:\RR\rightarrow\{-1,0,1\}$
\begin{equation}
  \label{eq:threshold}
  t_\tau(x) = 
  \begin{cases}
    -1 & \text{ if } x < -\tau, \\
    0 & \text{ if } |x| \leq \tau, \\
    1 & \text{ if } x > \tau. \\
  \end{cases}
\end{equation}
A comparison function $c_\tau:\RR^p\times \RR^p\rightarrow \{-1, 0,
1\}$ is defined as the thresholded difference of predicted ranks
\begin{equation}
  \label{eq:compare_general}
  c_\tau(\mathbf x, \mathbf x') = 
  t_\tau\big(
  r(\mathbf x') - r(\mathbf x)
  \big).
\end{equation}
We can then use grid search to estimate an optimal threshold $\hat
\tau$, by minimizing the zero-one loss with respect to all the
training pairs:
\begin{equation}
  \hat \tau = \argmin_{\tau}
  \sum_{i=1}^n
  I\left[ c_\tau(\mathbf x_i, \mathbf x_i') \neq y_i \right].
\end{equation}
However, there are two potential problems with the learned comparison
function $c_{\hat\tau}$. First, the equality pairs $i\in \mathcal I_0$ are not
used to learn the weight vector $\mathbf w$ in (\ref{eq:svmrank}). Second, the
threshold $\hat \tau$ is learned in a separate optimization step,
which may be suboptimal. In the next section, we propose a new
algorithm that fixes these issues by directly using all the
training pairs in a single learning problem.
\section{Support vector comparison machines}
\label{sec:svm-compare}

\begin{figure}[b!]
  \centering
  \input{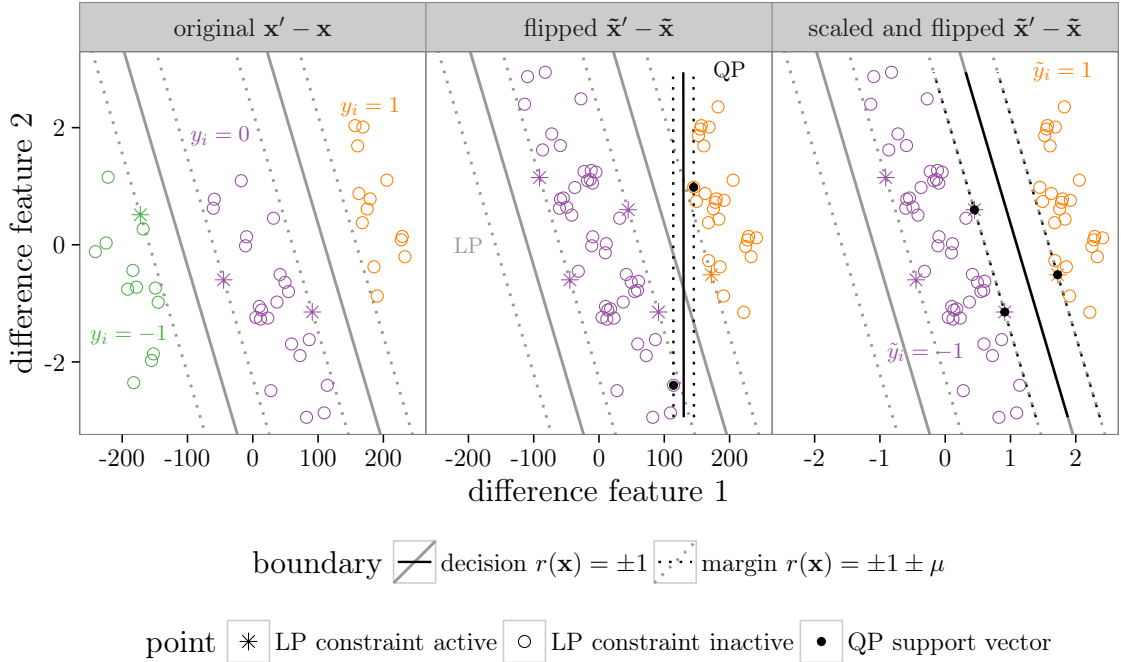}
  \caption{The separable LP and QP comparison problems. \textbf{Left}:
    the difference vectors $\mathbf x'-\mathbf x$ of the original data
    and the optimal solution to the LP
    (\ref{eq:max-margin-lp}). \textbf{Middle}: for the unscaled
    flipped data $\mathbf{\tilde x'}-\mathbf{ \tilde x}$
    (\ref{eq:tilde}), the LP is not the same as the QP
    (\ref{eq:max-margin-qp-tilde}). \textbf{Right}: in these scaled data,
    the QP is equivalent to the LP.}
  \label{fig:hard-margin}
\end{figure}

In this section we discuss new learning algorithms for comparison
problems. In all cases, we will first learn a ranking function
$r:\RR^p\rightarrow\RR$ and then a comparison function
$c_1:\RR^p\times \RR^p\rightarrow\{-1,0,1\}$ defined in
(\ref{eq:compare_general}). In other words, a small rank difference
$|r(\mathbf x')-r(\mathbf x)|\leq 1$ is considered insignificant, and
there are two decision boundaries $r(\mathbf x')-r(\mathbf
x)\in\{-1,1\}$.

\subsection{LP and QP for separable data}
\label{sec:lp-qp}

In our learning setup, the best comparison function is the one with
maximum margin. We will define the margin in two different ways, which
correspond to the linear program (LP) and quadratic program (QP)
discussed below. To illustrate the differences between these
max-margin comparison problems, in this subsection we assume that the
training data are linearly separable. Later in
Section~\ref{sec:kernelized-qp}, we propose an algorithm for learning
a nonlinear function from non linearly-separable data.

In the following linear program, we learn a linear ranking function
$r(\mathbf x)=\mathbf w^\intercal \mathbf x$ that maximizes the margin
$\mu$, defined in terms of ranking function values. The margin $\mu$
is the smallest rank difference between a decision boundary $r(\mathbf
x)\in\{-1,1\}$ and a difference vector $r(\mathbf x_i'-\mathbf
x_i)$. The max margin LP is
\begin{eqnarray}
  \label{eq:max-margin-lp}
  \maximize_{\mu\in\RR^+, \mathbf w\in\RR^p}\ &&\hskip -0.5cm \mu \\
  \nonumber
  \text{subject to}\ && \hskip -0.5cm \mu \leq
  1-|\mathbf w^\intercal (\mathbf x_i' - \mathbf x_i)|,\ 
  \forall\  i\in \mathcal I_0,\\
  \nonumber
  &&\hskip -0.5cm
  \mu \leq -1 +  \mathbf w^\intercal(\mathbf x_i'-\mathbf x_i)y_i,
  \ \forall\ i\in \mathcal I_1\cup \mathcal I_{-1}.
\end{eqnarray}
The optimal decision boundaries $r(\mathbf x)\in\{-1,1\}$ and margin
boundaries $r(\mathbf x)\in\{-1\pm \mu, 1 \pm \mu\}$ are drawn 
in Figure~\ref{fig:hard-margin}. Note that finding a feasible point
for this LP is a test of linear separability. If there are no feasible
points then the data are not linearly separable.

Another way to formulate the comparison problem is by first performing
a change of variables, and then solving a binary SVM QP. The idea is
to maximize the margin between significant differences
$y_i\in\{-1,1\}$ and equality pairs $y_i=0$. Let $\mathbf X_y,\mathbf
X_y'$ be the $|\mathcal I_y|\times p$ matrices formed by all the pairs
$i\in \mathcal I_y$. We define a new ``flipped'' data set with
$m=|\mathcal I_1|+|\mathcal I_{-1}|+2|\mathcal I_0|$ pairs suitable
for training a binary SVM:
\begin{equation}
\label{eq:tilde}
\mathbf{  \tilde X} = \left[
    \begin{array}{c}
      \mathbf X_1 \\
      \mathbf X_{-1}'\\
      \mathbf X_0\\
      \mathbf X_0'
    \end{array}
  \right],\ 
  \mathbf{\tilde X'} = \left[
    \begin{array}{c}
      \mathbf X_1' \\
      \mathbf X_{-1}\\
      \mathbf X_0'\\
      \mathbf X_0
    \end{array}
  \right],\ 
  \mathbf{\tilde y} = \left[
    \begin{array}{c}
      \mathbf 1_{|\mathcal I_1|} \\
      \mathbf 1_{|\mathcal I_{-1}|}\\
      \mathbf{-1}_{|\mathcal I_0|}\\
      \mathbf{-1}_{|\mathcal I_0|}
    \end{array}
  \right],
\end{equation}
where $\mathbf 1_n$ is an $n$-vector of ones, $\mathbf{\tilde
  X},\mathbf{\tilde X'}\in\RR^{m\times p}$ and $\mathbf{\tilde
  y}\in\{-1,1\}^m$. Note that $\tilde y_i=-1$ implies no significant
difference between $\mathbf{\tilde x}_i$ and $\mathbf{\tilde x}_i'$,
and $\tilde y_i=1$ implies that $\mathbf{\tilde x}_i'$ is better than
$\mathbf{\tilde x}_i$. We then learn an affine function $f(\mathbf
x)=\beta+\mathbf u^\intercal \mathbf x$ using a binary SVM QP:
\begin{eqnarray}
  \label{eq:max-margin-qp-tilde}
  \minimize_{\mathbf u\in\RR^p, \beta\in\RR}\ &&\hskip -0.5cm
  \mathbf u^\intercal \mathbf u  \\
\nonumber    \text{subject to}\ &&\hskip -0.5cm 
    \tilde y_i (\beta + 
    \mathbf u^\intercal( \mathbf{\tilde x}_i'-\mathbf{\tilde x}_i) ) \geq 1,
    \ \forall i\in\{1,\dots,m\}.
\end{eqnarray}
This SVM QP learns a separator $f(\mathbf x)=0$ between significant
difference pairs $\tilde y_i=1$ and insignificant difference pairs
$\tilde y_i=-1$ (middle and right panels of
Figure~\ref{fig:hard-margin}). However, we want a comparison function
that predicts $c(\mathbf x,\mathbf x')\in\{-1,0,1\}$. So we use the
next lemma to construct a ranking function $r(\mathbf x)= \mathbf{\hat
  w}^\intercal \mathbf x$ that is feasible for the original max margin
comparison LP (\ref{eq:max-margin-lp}), and can be used with the
comparison function~$c_1$, defined in (\ref{eq:compare_general}).
\begin{lemma}
  Let $\mathbf u\in\RR^p,\beta\in\RR$ be a solution of
  (\ref{eq:max-margin-qp-tilde}). Then $\hat \mu = -1/\beta$ and
  $\mathbf{\hat w} = -\mathbf u/\beta$ are feasible for
  (\ref{eq:max-margin-lp}).
  \label{lemma:feasible}
\end{lemma}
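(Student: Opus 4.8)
The plan is to unwind the definition of the flipped data set (\ref{eq:tilde}) so that every constraint of the SVM QP (\ref{eq:max-margin-qp-tilde}) becomes an inequality about the original difference vectors $\mathbf x_i'-\mathbf x_i$, and then verify directly that $\hat\mu=-1/\beta$ and $\mathbf{\hat w}=-\mathbf u/\beta$ satisfy each of the two families of LP constraints in (\ref{eq:max-margin-lp}). The whole argument is elementary once the flip is made explicit; there is no real obstacle, only bookkeeping, plus one sign issue to be careful about.

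First I would translate the QP constraints. For $i\in\mathcal I_1$ the flipped pair has $\tilde y_i=1$ and difference $\mathbf x_i'-\mathbf x_i$, so the constraint reads $\beta+\mathbf u^\intercal(\mathbf x_i'-\mathbf x_i)\geq 1$. For $i\in\mathcal I_{-1}$ the pair is swapped but still labelled $\tilde y_i=1$, giving $\beta-\mathbf u^\intercal(\mathbf x_i'-\mathbf x_i)\geq 1$. For each $i\in\mathcal I_0$ both copies appear with $\tilde y_i=-1$, yielding the pair of inequalities $\beta\pm\mathbf u^\intercal(\mathbf x_i'-\mathbf x_i)\leq -1$. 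Adding the two $\mathcal I_0$ inequalities gives $\beta\leq -1$, so in particular $\beta<0$ and $\hat\mu=-1/\beta\in(0,1]\subset\RR^+$; this also shows $\mathbf{\hat w}$ is well-defined. Taking instead the larger of $\pm\mathbf u^\intercal(\mathbf x_i'-\mathbf x_i)$ among the two $\mathcal I_0$ inequalities gives $|\mathbf u^\intercal(\mathbf x_i'-\mathbf x_i)|\leq -1-\beta$ for all $i\in\mathcal I_0$.

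Next I would check the LP constraints family by family, using that multiplying an inequality by $-1/\beta>0$ preserves its direction while multiplying by $1/\beta<0$ reverses it. For $i\in\mathcal I_0$: scaling $|\mathbf u^\intercal(\mathbf x_i'-\mathbf x_i)|\leq -1-\beta$ by $-1/\beta$ gives $|\mathbf{\hat w}^\intercal(\mathbf x_i'-\mathbf x_i)|\leq 1+1/\beta$, hence $1-|\mathbf{\hat w}^\intercal(\mathbf x_i'-\mathbf x_i)|\geq -1/\beta=\hat\mu$, as required. For $i\in\mathcal I_1$ (so $y_i=1$): scaling $\mathbf u^\intercal(\mathbf x_i'-\mathbf x_i)\geq 1-\beta$ by $-1/\beta$ gives $\mathbf{\hat w}^\intercal(\mathbf x_i'-\mathbf x_i)\geq 1-1/\beta=1+\hat\mu$, so $-1+\mathbf{\hat w}^\intercal(\mathbf x_i'-\mathbf x_i)y_i\geq\hat\mu$. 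For $i\in\mathcal I_{-1}$ (so $y_i=-1$): the translated QP constraint is $\mathbf u^\intercal(\mathbf x_i'-\mathbf x_i)\leq \beta-1$; scaling by $1/\beta<0$ reverses it to $\tfrac{1}{\beta}\mathbf u^\intercal(\mathbf x_i'-\mathbf x_i)\geq 1-1/\beta=1+\hat\mu$, and the left-hand side equals $-\mathbf{\hat w}^\intercal(\mathbf x_i'-\mathbf x_i)=\mathbf{\hat w}^\intercal(\mathbf x_i'-\mathbf x_i)y_i$, again giving $-1+\mathbf{\hat w}^\intercal(\mathbf x_i'-\mathbf x_i)y_i\geq\hat\mu$. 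Together with $\hat\mu\in\RR^+$, this shows $(\hat\mu,\mathbf{\hat w})$ is feasible for (\ref{eq:max-margin-lp}).

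The only point that genuinely needs care — and the only place the construction could break — is the sign of $\beta$: both the positivity of $\hat\mu$ and the direction of every scaling step above rely on $\beta<0$. This is exactly what the two duplicated, oppositely oriented $\mathcal I_0$ rows of (\ref{eq:tilde}) guarantee, which is presumably why each equality pair is included twice in the flipped data set. I therefore expect the write-up to be short, with the ``$\beta\leq -1$'' observation as its load-bearing step.
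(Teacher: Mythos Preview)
Your proof is correct and follows essentially the same route as the paper: unwind the flipped data (\ref{eq:tilde}) so the QP constraints become inequalities on the original differences $\mathbf x_i'-\mathbf x_i$, then check that the substitution $\hat\mu=-1/\beta$, $\mathbf{\hat w}=-\mathbf u/\beta$ turns them into the LP constraints. The paper additionally motivates the choice of $\hat\mu$ and $\mathbf{\hat w}$ via a heuristic decision-boundary/margin argument before doing this rewrite, whereas you go straight to the verification; conversely, you make explicit the step $\beta\leq -1$ (hence $\hat\mu\in\RR^+$ and the scalings preserve inequality directions), which the paper leaves implicit.
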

\begin{proof}
  Begin by assuming that we want to find a ranking function $r(\mathbf
  x)=\mathbf{\hat w}^\intercal \mathbf x = \gamma \mathbf u^\intercal
  \mathbf x$, where $\gamma\in\RR$ is a scaling constant. Then
  consider that for all $\mathbf x$ on the decision boundary, we have
  \begin{equation}
    \label{eq:dec-boundary-rank}
    r(\mathbf x) = \mathbf{\hat w}^\intercal \mathbf x = 1\text{ and } 
    f(\mathbf x) = \mathbf u^\intercal \mathbf x + \beta = 0.
  \end{equation}
  Taken together, it is clear that $\gamma=-1/\beta$ and thus
  $\mathbf{\hat w} = -\mathbf u/\beta$. Consider for all $\mathbf x$ on the
  margin we have
  \begin{equation}
    \label{eq:margin-rank}
    r(\mathbf x) = \mathbf{\hat w}^\intercal \mathbf x = 1+\hat\mu\text{ and } 
    f(\mathbf x) = \mathbf u^\intercal \mathbf x + \beta= 1.
  \end{equation}
  Taken together, these imply $\hat \mu=-1/\beta$. Now, by definition
  of the flipped data (\ref{eq:tilde}), we can re-write the max margin
  QP (\ref{eq:max-margin-qp-tilde}) as
\begin{eqnarray}
  \label{eq:max-margin-qp}
    \minimize_{\mathbf u\in\RR^p, \beta\in\RR}\ &&
    \hskip -0.5cm \mathbf u^\intercal \mathbf u  \\
    \text{subject to}\ &&\hskip -0.5cm
    \nonumber \beta + |\mathbf u^\intercal (\mathbf x_i'-\mathbf x_i)| \leq -1,\
    \forall\  i\in \mathcal I_0,\\
    &&\hskip -0.5cm
\nonumber \beta + \mathbf u^\intercal(\mathbf x_i'-\mathbf x_i)y_i \geq 1,
\ \forall\ i \in \mathcal I_1\cup \mathcal I_{-1}.
\end{eqnarray}
By re-writing the constraints of (\ref{eq:max-margin-qp}) in terms of
$\hat \mu$ and $\mathbf{\hat w}$, we recover the same constraints as
the max margin comparison LP (\ref{eq:max-margin-lp}). Thus $\hat \mu,
\mathbf{\hat w}$ are feasible for (\ref{eq:max-margin-lp}). (QED)
\end{proof}

One may also wonder: are $\hat \mu,\mathbf{\hat w}$ optimal for the
max margin comparison LP? In general, the answer is no, and we give
one counterexample in the middle panel of
Figure~\ref{fig:hard-margin}. This is because the LP defines the
margin in terms of ranking function values $r(\mathbf x)=\mathbf
w^\intercal \mathbf x$, but the QP defines the margin in terms of the
size of the normal vector $||\mathbf u||$, which depends on the scale
of the inputs $\mathbf x,\mathbf x'$. However, when the input
variables are scaled in a pre-processing step, we have observed that
the solutions to the LP and QP are equivalent (right panel of
Figure~\ref{fig:hard-margin}).

Lemma~\ref{lemma:feasible} establishes the fact that one can learn a
ranking function $r$ and a corresponding comparison function $c_1$, defined in
(\ref{eq:compare_general}), by solving either the LP
(\ref{eq:max-margin-lp}) or the QP (\ref{eq:max-margin-qp}). To make
corresponding learning problems for non linearly-separable data as defined by the linear separability test in this section, one can add
slack variables to either the QP or the LP. In the next subsection, we
pursue only the QP, since it leads to a dual problem with a sparse
solution that can be solved by any standard SVM solver such as libsvm
\citep{libsvm}.

\subsection{Kernelized QP for non-separable data}
\label{sec:kernelized-qp}
In this subsection, we assume the data are not linearly separable, and want to
learn a nonlinear ranking function. We define a positive definite
kernel $\kappa:\RR^p\times \RR^p\rightarrow\RR$, which implicitly
defines an enlarged set of features $\Phi(\mathbf x)$ (middle panel of
Figure~\ref{fig:norm-data}). As in (\ref{eq:max-margin-qp-tilde}), we
learn a function $f(\mathbf x)=\beta + \mathbf u^\intercal
\Phi(\mathbf x)$ which is affine in the feature space. Let $\mathbf
\alpha,\mathbf \alpha'\in\RR^m$ be coefficients such that $\mathbf
u=\sum_{i=1}^m
\alpha_i \Phi(\mathbf{\tilde x}_i) + 
\alpha_i' \Phi(\mathbf{\tilde x}_i')$, and so we have
 $f(\mathbf x) =\beta + \sum_{i=1}^m 
 (\alpha_i \kappa(\mathbf{\tilde x}_i, \mathbf x) +
 \alpha_i' \kappa(\mathbf{\tilde x}_i', \mathbf x))$. 
 We then use Lemma~\ref{lemma:feasible} to
define the ranking function
\begin{equation}
  \label{eq:kernelized_r}
  r(\mathbf x)= \frac{\mathbf u^\intercal \Phi(\mathbf x)}{-\beta} = 
  \sum_{i=1}^m
  \frac{
    \alpha_i \kappa(\mathbf{\tilde x}_i, \mathbf x) +
    \alpha_i'  \kappa(\mathbf{\tilde x}_i', \mathbf x)}
{-\beta}.
\end{equation}

\begin{algorithm}[b!]
   \caption{SVMcompare}
   \label{alg:SVMcompare}
\begin{algorithmic}
  \STATE {\bfseries Input:} cost $C\in\RR^+$, kernel
  $\kappa:\RR^p\times \RR^p \rightarrow \RR$, features $\mathbf
  X,\mathbf X'\in\RR^{n \times p}$, labels $\mathbf y\in\{-1,0,1\}^n$.

  \STATE \makebox[0.5cm]{$\mathbf{\tilde X}$} $\gets [$
  \makebox[1cm]{$\mathbf X_1^\intercal$}
  \makebox[1cm]{$\mathbf X_{-1}'^\intercal$}
  \makebox[1cm]{$\mathbf X_0^\intercal$}
  \makebox[1cm]{$\mathbf X_0'^\intercal$}
  $]^\intercal$.

  \STATE \makebox[0.5cm]{$\mathbf{\tilde X}'$} $\gets [$
  \makebox[1cm]{$\mathbf X_1'^\intercal$}
  \makebox[1cm]{$\mathbf X_{-1}^\intercal$}
  \makebox[1cm]{$\mathbf X_0'^\intercal$}
  \makebox[1cm]{$\mathbf X_0^\intercal$}
  $]^\intercal$.

  \STATE \makebox[0.5cm]{$\mathbf{\tilde y}$} $\gets [$
  \makebox[1cm]{$\mathbf 1_{|\mathcal I_1|}^\intercal$}
  \makebox[1cm]{$\mathbf 1_{|\mathcal I_{-1}|}^\intercal$}
  \makebox[1cm]{$-\mathbf 1_{|\mathcal I_0|}^\intercal$}
  \makebox[1cm]{$-\mathbf 1_{|\mathcal I_0|}^\intercal$}
  $]^\intercal$.

  \STATE $\mathbf K \gets \proc{KernelMatrix}(
  \mathbf{\tilde X}, \mathbf{\tilde X'}, \kappa)$.

  \STATE $\mathbf M \gets [ -\mathbf I_m\ \mathbf I_m ]^\intercal$.

  \STATE $\mathbf{\tilde K} \gets \mathbf M^\intercal \mathbf  K \mathbf M$.

  \STATE $\mathbf v,\beta \gets \proc{SVMdual}(
  \mathbf{\tilde K}, \mathbf{\tilde y}, C)$.

  \STATE $\sv \gets\{i: v_i>0\}$.
  
  \STATE {\bfseries Output:} Support vectors $\mathbf{\tilde
    X}_{\sv },\mathbf{\tilde X}_{\sv }'$, labels
  $\mathbf{\tilde y}_{\sv }$, bias~$\beta$, dual variables $\mathbf v$.

   \end{algorithmic}
\end{algorithm}

Let $\mathbf K=[
\mathbf k_1\cdots \mathbf k_m
\ \mathbf k_1'\cdots \mathbf k_m']\in\RR^{2m\times 2m}$ be the
kernel matrix, where for all pairs $i\in\{1, \dots, m\}$, the kernel
vectors $\mathbf k_i,\mathbf k_i'\in\RR^{2m}$ are defined as
\begin{equation}
  \mathbf k_i = \left[
    \begin{array}{c}
      \kappa(\mathbf{\tilde x}_1, \mathbf{\tilde x}_i)\\
      \vdots\\
      \kappa(\mathbf{\tilde x}_m, \mathbf{\tilde x}_i)\\
      \kappa(\mathbf{\tilde x}_1', \mathbf{\tilde x}_i)\\
      \vdots\\
      \kappa(\mathbf{\tilde x}_m', \mathbf{\tilde x}_i)
    \end{array}
  \right],\ 
  \mathbf k_i' = \left[
    \begin{array}{c}
      \kappa(\mathbf{\tilde x}_1, \mathbf{\tilde x}_i')\\
      \vdots\\
      \kappa(\mathbf{\tilde x}_m, \mathbf{\tilde x}_i')\\
      \kappa(\mathbf{\tilde x}_1', \mathbf{\tilde x}_i')\\
      \vdots\\
      \kappa(\mathbf{\tilde x}_m', \mathbf{\tilde x}_i')
    \end{array}
  \right].
\end{equation}
Letting $\mathbf a=[\alpha^\intercal\
\alpha'^\intercal]^\intercal\in\RR^{2m}$, the norm of the affine
function $f$ in the feature space is $\mathbf u^\intercal \mathbf u =
\mathbf a^\intercal \mathbf K \mathbf a$, and we can write the primal soft-margin
comparison QP for some $C\in\RR^+$ as
\begin{eqnarray}
  \minimize_{\mathbf a\in\RR^{2m},\mathbf \xi\in\RR^m,\beta\in\RR}\ \ &&\hskip -0.5cm 
  \frac 1 2 \mathbf a^\intercal \mathbf K \mathbf a + C\sum_{i=1}^m \xi_i \\
  \text{subject to}\ \ &&\hskip -0.5cm \nonumber
  \text{for all $i\in\{1,\dots,m\}$, }
  \xi_i \geq 0,\\
  &&\hskip -0.5cm \nonumber \text{and }
  \xi_i \geq 1-\tilde y_i(\beta + \mathbf a^\intercal (\mathbf k_i'-\mathbf k_i)).
\end{eqnarray}
Let $\mathbf z, \mathbf v\in\RR^m$ be the dual variables, and
$\mathbf Y=\Diag(\mathbf{\tilde y})$ be the diagonal matrix of $m$
labels. Then the Lagrangian can be written as
\begin{equation}
  \label{eq:lagrangian}
  \mathcal L = \frac 1 2 \mathbf a^\intercal \mathbf K \mathbf a + 
  C\mathbf \xi^\intercal\mathbf  1_{m}\\
  -\mathbf z^\intercal \mathbf \xi
  + \mathbf v^\intercal(\mathbf 1_m - \mathbf{\tilde y}\beta
  - \mathbf Y \mathbf M^\intercal \mathbf K\mathbf  a - \xi),
\end{equation}
where $\mathbf M=[-\mathbf I_m \, \mathbf
I_m]^\intercal\in\{-1,0,1\}^{2m\times m}$ and $\mathbf I_m$ is the
identity matrix. Solving $\nabla_{\mathbf a} \mathcal L=0$ results in
the following stationary condition:
\begin{equation}
  \label{eq:stationarity}
  \mathbf a = \mathbf M \mathbf Y \mathbf v.
\end{equation}
The rest of the derivation of the dual comparison problem is the same
as for the standard binary SVM. The resulting dual QP is
\begin{equation}
  \begin{aligned}
    \label{eq:svm-dual}
    \minimize_{\mathbf v\in\RR^m}\ \ &
    \frac 1 2 \mathbf v^\intercal \mathbf Y \mathbf M^\intercal 
    \mathbf K \mathbf M \mathbf Y \mathbf v - \mathbf v^\intercal \mathbf 1_m\\
    \text{subject to}\ \ &
    \sum_{i=1}^m v_i \tilde y_i = 0,\\
    & \text{for all $i\in\{1,\dots,m\}$, } 0\leq v_i\leq C,
  \end{aligned}
\end{equation}
which is equivalent to the dual problem of a standard binary SVM with
kernel $\mathbf{\tilde K} = \mathbf M^\intercal \mathbf K \mathbf
M\in\RR^{m\times m}$ and labels $\mathbf{\tilde y}\in\{-1,1\}^m$.

So we can solve the dual comparison problem (\ref{eq:svm-dual}) using
any efficient SVM solver, such as libsvm \citep{libsvm}. We used the R
interface in the \texttt{kernlab} package \citep{kernlab}, and our
code is available in the \texttt{rankSVMcompare} package on Github.

After obtaining optimal dual variables $\mathbf v\in\RR^m$ as the solution of
(\ref{eq:svm-dual}), the SVM solver also gives us the optimal bias
$\beta$ by analyzing the complementary slackness conditions.
The learned ranking function can be quickly evaluated since the
optimal $\mathbf v$ is sparse. Let $\sv =\{i: v_i > 0\}$ be the indices
of the support vectors. Since we need only $2|\sv |$ kernel
evaluations, the ranking function (\ref{eq:kernelized_r}) becomes
\begin{equation}
  \label{eq:r_sv}
  r(\mathbf x)= 
  \sum_{i\in \sv }
  \tilde y_i v_i\left[ 
    \kappa(\mathbf{\tilde x}_i, \mathbf x)
    - \kappa(\mathbf{\tilde x}_i', \mathbf x)
  \right]/\beta.
\end{equation}
Note that for all $i\in\{1,\dots,m\}$, the optimal primal variables
$\alpha_i=-\tilde y_i v_i$ and $\alpha_i'=\tilde y_i v_i$ are
recovered using the stationary condition (\ref{eq:stationarity}). The
learned comparison function $c_1$, as defined in (\ref{eq:compare_general}), remains the same.

The training procedure is summarized as
Algorithm~\ref{alg:SVMcompare}, SVMcompare.
There are two sub-routines: \proc{KernelMatrix} computes
the $2m\times 2m$ kernel matrix, and \proc{SVMdual} solves the SVM
dual QP (\ref{eq:svm-dual}). There are two hyper-parameters to tune:
the cost $C$ and the kernel $\kappa$. As with standard SVM for binary
classification, these parameters can be tuned by minimizing the
prediction error on a held-out validation set.

\section{Comparison to SVMrank in sushi and simulated data sets}
\label{sec:results}



\begin{table}
    \caption{\label{tab:models}
    Summary of how the different algorithms 
    use the input pairs to learn the ranking 
    function $r$. Equality $y_i=0$ pairs are shown as 
    segments and inequality $y_i \in \{-1,1\}$ pairs 
    are shown as $\rightarrow$  arrows. For example, 
    the rank2 algorithm converts each input equality pair
    to two opposite-facing inequality pairs.}
\centering
  \begin{tabular}{r|cc|cc|}
Input:&    \multicolumn{2}{c|}{equality pairs}
&    \multicolumn{2}{c|}{inequality pairs}\\
    & $|\mathcal I_0|$ 
    & --- 
    & $|\mathcal I_1|+|\mathcal I_{-1}|$ 
    & $\rightarrow$
    \\
    \hline
    rank 
    & 0 
    & 
    & $|\mathcal I_1|+|\mathcal I_{-1}|$ 
    & $\rightarrow$ 
    \\
    \hline
    rank2 
    & $2|\mathcal I_0|$ 
    & $\leftarrow \rightarrow$
    & $2(|\mathcal I_1|+|\mathcal I_{-1}|)$ 
    & $\rightarrow \rightarrow$
    \\
    \hline
    compare 
    & $2|\mathcal I_0|$ 
    & --- --- 
    & $|\mathcal I_1|+|\mathcal I_{-1}|$ 
    & $\rightarrow$\\
    \hline
  \end{tabular}
\end{table}

The goal of learning to compare is to accurately predict a test set of
labeled pairs (\ref{eq:min_c}), which includes equality $y_i=0$
pairs. We test the SVMcompare algorithm alongside two baseline models
that use SVMrank \citep{ranksvm}. We chose SVMrank as a baseline
because of its similar large-margin learning formulation, to
demonstrate the importance of directly modeling the equality $y_i=0$
pairs. SVMrank does not directly model the equality $y_i=0$ pairs, so
we expect that the proposed SVMcompare algorithm makes better
predictions when these data are present. The differences between the
algorithms are summarized in Table~\ref{tab:models}:

\begin{description}
\item[rank] is described in Section~\ref{sec:svmrank}: first we use
  $|\mathcal I_1|+|\mathcal I_{-1}|$ inequality pairs to learn
  SVMrank, then we use all $n$ pairs to learn a threshold $\hat \tau$
  for when to predict $c(\mathbf x,\mathbf x')=0$.
\item[rank2] is another variant of SVMrank that treats each input pair
  as 2 inequality pairs. Since SVMrank can only use inequality pairs,
  we transform each equality pair $(\mathbf x_i,\mathbf x_i',0)$ into two
  opposite-facing inequality pairs $(\mathbf x_i',\mathbf x_i,1)$ and
  $(\mathbf x_i,\mathbf x_i',1)$. 
  To ensure equal weight for all input pairs in the
  cost function, we also duplicate each inequality pair, resulting in
  $2n$ pairs used to train SVMrank.
\item[compare] is the SVMcompare model proposed in this paper, which
  uses $m=n+|\mathcal I_0|$ input pairs.
\end{description}

For each experiment, there are train, validation, and test sets each
drawn from the same data set of examples. We fit a $10\times 10$ grid of
models to the training set (cost parameter $C=10^{-3},\dots,10^3$,
Gaussian kernel width $2^{-7},\dots,2^4$), and select the model using
the validation set. We use two evaluation metrics to judge the
performance of the models: zero-one loss and area under the ROC curve
(AUC).

Note that the ROC curves are calculated by first evaluating the
learned ranking function $r(\mathbf x)$ at each test point $\mathbf
x$, and then varying the threshold $\tau$ of the comparison function
$c_\tau$, as defined in (\ref{eq:compare_general}). For $\tau=0$ we have 100\% false
positive rate and for $\tau=\infty$ we have 100\% false negative rate
(Table~\ref{tab:evaluation}).

\begin{table}
  \caption{We use area under the ROC curve to evaluate predictions
    $\hat y$ given the true label $y$. False positives (FP) occur 
    when predicting a significant difference $\hat y\in\{-1,1\}$ 
    when there is none ($y=0$). False Negatives (FN) occur when
    a labeled difference $y\in\{-1,1\}$ is incorrectly predicted.}
  \centering
  \begin{tabular}{|p{1cm}|p{1cm}|p{1cm}|p{1cm}|}\hline
    \rowcolor{lightgray}
    $\hat{y}$/ $y$
    &\textbf{-1}&\textbf{0}&\textbf{1}\\ \hline
    \textbf{-1}&0  & FP & FN   	\\ \hline 
    \textbf{0} &FN& 0& FN\\ \hline
    \textbf{1} & FN & FP &0	\\ \hline
  \end{tabular}
  \label{tab:evaluation}
\end{table}

\subsection{Simulation: squared norms in 2D}
\label{sec:simulations}

We used a simulation to visualize the learned nonlinear ranking
functions in a $p=2$ dimensional feature space. We generated pairs
$\mathbf x_i,\mathbf x_i'\in[-3,3]^2$ and noisy labels
$y_i=t_1[r(\mathbf x'_i)-r(\mathbf x_i)+\epsilon_i]$, where $t_1$ is
the threshold function, (\ref{eq:threshold}), $r$ is the latent ranking
function, $\epsilon_i\sim N(\mu\ = 0, \sigma)$ is noise, and $\sigma=1/4$ is
the standard deviation. We picked train, validation, and test sets,
each with the same number of pairs $n$ and the same proportion $\rho$
of equality pairs. We selected the model with minimum zero-one loss on
the validation set.

In Figure~\ref{fig:norm-level-curves} we defined the true ranking
function $r(\mathbf x)=||\mathbf x||_1^2$, then picked $n=100$ pairs,
with $\rho=1/2$ equality and inequality pairs. We show the training
set and the level curves of the ranking functions learned by the
SVMrank and SVMcompare models. It is clear that the true ranking
function $r$ is not accurately recovered by the rank model, since it
does not use the equality $y_i=0$ pairs. In contrast, the compare and
rank2 methods which exploit the equality $y_i=0$ pairs are able to
recover a ranking function that is closer to the true $r$.

We also used the simulations to demonstrate that our model can achieve
lower test error than the baseline SVMrank model, by learning from the
equality $y_i=0$ pairs. In Figure~\ref{fig:simulation-samples} we
fixed the proportion of equality pairs $\rho=1/2$, varied the number
of training pairs $n\in\{50,\dots, 800\}$, and tested three simulated
ranking functions $r(\mathbf x)=||\mathbf x||^2_j$ for
$j\in\{1,2,\infty\}$. In general, the test error of all models
decreases as training set size $n$ increases. The model with the
highest test error is the rank model, which does not use the equality
$y_i=0$ pairs. The next worst model is the rank2 model, which converts
the equality $y_i=0$ pairs to inequality pairs and then trains
SVMrank. The best model is the proposed SVMcompare model, which
achieves test error as good as the true ranking function in the case
of $r(\mathbf x)=||\mathbf x||^2_2$.

\begin{figure}[b!]
  \centering
  \input{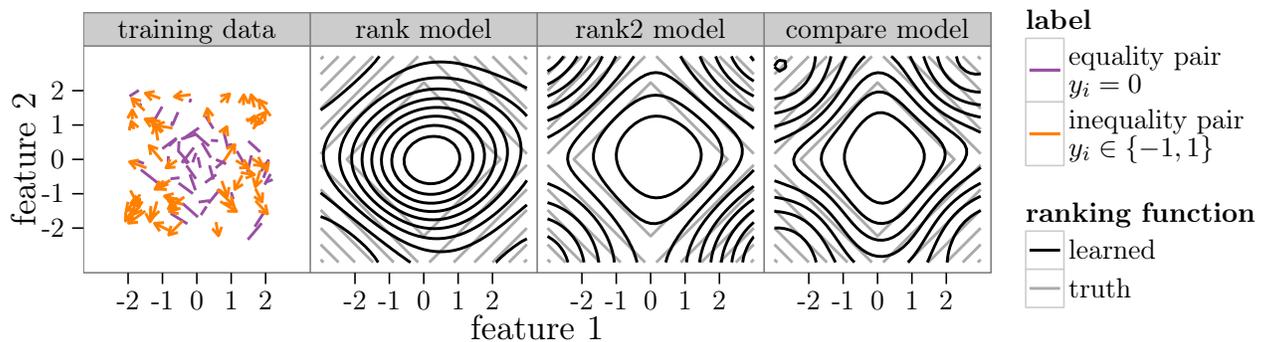}
  \caption{Application to a simulated pattern $r(\mathbf x)=||\mathbf
    x||_1^2$ where $\mathbf x\in\RR^2$. \textbf{Left}: the training
    data are $n=100$ pairs, half equality (segments indicate two
    points of equal rank), and half inequality (arrows point to the
    higher rank). \textbf{Others}: level curves of the learned ranking
    functions. The rank model does not directly model the equality
    pairs, so the rank2 and compare models recover the true pattern
    better.}
  \label{fig:norm-level-curves}
\end{figure}

\begin{figure}[b!]
  \input{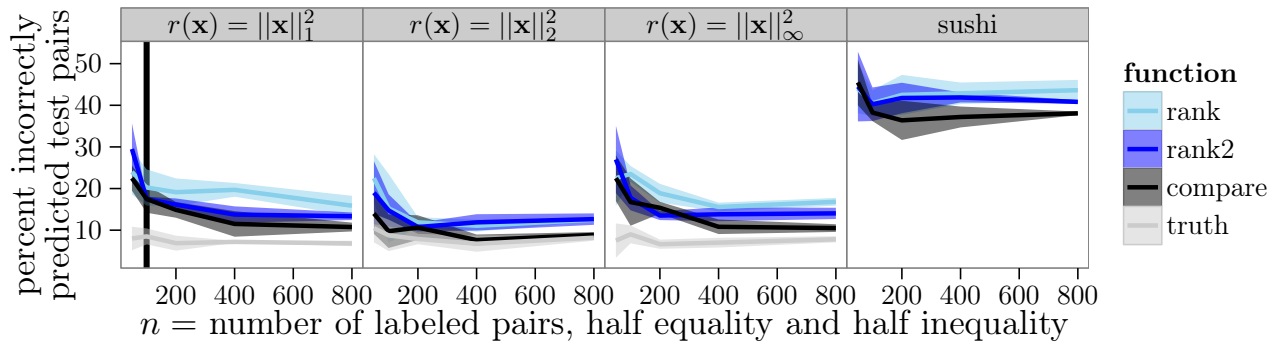}
  \caption{Test error for 3 different simulated patterns $r(\mathbf
    x)$ where $\mathbf x\in\RR^2$ and one real
    sushi data set where $\mathbf x\in\RR^{14}$. We randomly generated data sets
    with $\rho=1/2$ equality and 1/2 inequality pairs, then plotted test
    error as a function of data set size $n$ (a vertical line
    shows the data set which was used in
    Figure~\ref{fig:norm-level-curves}). Lines show mean and shaded
    bands show standard deviation over 4 test sets.}
  \label{fig:simulation-samples}
\end{figure}

\begin{figure}[!ht]
  \input{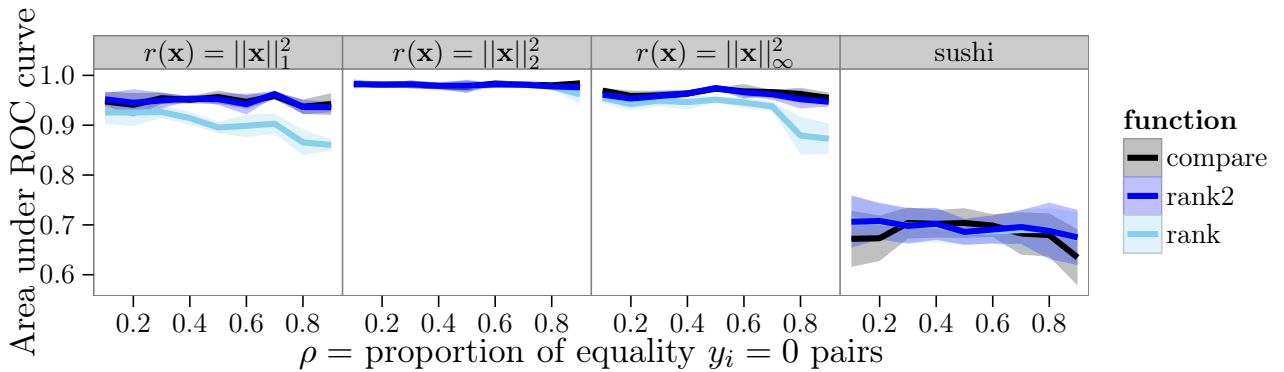}
  \caption{Area under the ROC curve (AUC) for 3 different simulated
    patterns $r(\mathbf x)$ where $\mathbf x\in\RR^2$ and one real
    sushi data set where $\mathbf x\in\RR^{14}$. For each data set we
    picked $n=400$ pairs, varying the proportion $\rho$ of equality
    pairs. We plot mean and standard deviation of AUC over 4 test
    sets.}
  \label{fig:auc}
\end{figure}

In Figure~\ref{fig:auc} we fixed the number of training pairs $n=400$
and varied the proportion $\rho$ of equality pairs for the three
simulated squared norm ranking functions $r$. We select the model with
maximum area under the validation set ROC curve, then use test set AUC
to evaluate the learned models. All methods perform close to the
optimal true ranking function when $r(\mathbf x)=||\mathbf
x||^2_2$. For the other patterns, it is clear that all the methods
perform similarly when there are mostly inequality pairs ($\rho=0.1$),
since SVMrank was designed for this type of training data. In
contrast, when there are mostly equality pairs ($\rho=0.9$), the
compare and rank2 methods clearly outperform the rank method, which
ignores the equality pairs. \changed{It is also clear that the rank2 and
compare methods perform similarly in terms of test AUC.}

\changed{Overall from the simulations, it is clear that when the data contain
equality pairs, it is advantageous to use a model such as the proposed
SVMcompare method which learns from them directly as a part of the
optimization problem.}

\subsection{Learning to rank sushi data}

We downloaded the sushi data set of \citet{object-ranking-methods}
from kamishima \\(http://www.kamishima.net/sushi). We used the
\textit{sushi3b.5000.10.score} from kamishima, which consist of 100 different
sushis rated by 5000 different people. Each person rated 10 sushis on
a 5 point scale, which we convert to 5 preference pairs, for a total
of 17,832 equality $y_i=0$ and 7,168 inequality $y_i\in\{-1,1\}$
pairs. For each pair $i$ we have features $\mathbf x_i,\mathbf
x_i'\in\RR^{14}$ consisting of 7 features of the sushi and 7 features
of the person. Sushi features are style, major, minor, oily, eating
frequency, price, and selling frequency. Person features are gender, age,
time, birthplace and current home (we converted Japanese prefecture
codes to latitude/longitude coordinates).
As in the simulations of Section~\ref{sec:simulations}, we picked
train, validation, and test sets, each with the same number of pairs
$n$ and the same proportion $\rho$ of equality pairs. We fit a grid of
models to the training set, select the model with
minimal zero-one loss on the validation set, and then use the test set
to estimate the generalization ability of the selected model.

In Figure~\ref{fig:simulation-samples} we fixed the proportion of
equality pairs $\rho=1/2$, varied the number of training pairs
$n\in\{50,\dots, 800\}$, and calculated test error. The relative
performance of the algorithms is the same as in the simulations: rank
has the highest test error, rank2 does better, and the proposed
SVMcompare algorithm has the lowest test error.

In Figure~\ref{fig:auc} we fixed the number of training pairs $n=400$,
varied the proportion $\rho$ of equality pairs, and calculated test
AUC. \changed{Like in the $r(\mathbf x)=||\mathbf x||_2^2$ simulation,
  test AUC is about the same for each of the models
  considered. Perhaps this is because they are all able to learn a
  nearly optimal ranking function for this problem.}

\changed{Overall from the sushi data, it is clear that the proposed SVMcompare
model performs better than the SVMrank methods in terms of test error,
and it performs as well as the SVMrank methods in terms of test AUC.
}

\section{SVMcompare predict outcomes of chess games more accurately than ELO}
\label{sec:chess}

In this section, we show that our SVMcompare algorithm can be used for
highly accurate prediction of the outcome of chess matches. Chess is a
game between 2 players that results in a win, loss or a draw, with draws being very common between highly ranked players in international
tournaments. We wished to predict outcomes of tournament
chess matches by learning a comparison function using features based
on player statistics.  The main statistic to quantify player rankings
in FIDE (Fédération Internationale des Échecs) competitions is the ELO
score.  The ELO rating system is a method for calculating relative
skill levels of players in competitor versus competitor games was
intially proposed by \citet{elo_score}.
The Glicko rating system 
provides a more complex alternative \citep{Glicko}.

\subsection{Data source and processing}

We downloaded the chess match dataset from Chessmetrics \\(http://www.chessmetrics.com/cm/), containing
1.8 million games played over the 11-year period from 1999--2009 by
54205 chess players.  For each of the years 1999--2006, we consider the
first four months (Jan--Apr) as a train set, and the last eight months
as a test set (May--Dec). 
We
removed all matches containing a player who had less than 10 matches
against other players in the train set, to prevent our data set from
containing players with very little information.  We also removed all
matches that contained a player's first match from the train set as we
would have no information about this player.  Before pre-processing, 30.9\% of matches were a draw and 69.1\% of matches resulted in a win or loss.  After pre-processing, the median percentage of draws and win-loss results was 44.7\% and 55.3\% respectively over each of the 8 datasets.
For each match $i$, we computed features
$\mathbf x_i,\mathbf x_i'\in\RR^{16}$ consisting of ELO
scores, Glicko scores, if the player had the initial move, the
percentage of instances where a player either lost to a lower ranked
player, or won against a higher ranked player, the average score
difference of opponents, win/loss/draw/games played raw values and
percentages in addition to various other statistics. ELO scores were
initially set at 1200 for all players and FIDE rules were applied to
score calculations.  ELO and Glicko scores were updated after every
match using the PlayerRatings R package  \citep{play-raitings}.  

\subsection{Cross-validation experiment setup}

For hyper-parameter selection, we used the first 3 months of each
12-month period.  This was done for computational speed reasons.  We performed cross validation splits of the first
$\{0.50,0.75,0.80,0.85\}$ matches in the set as our training set and
the remainder as the validation set.  We fit a grid of models for each
linear, polynomial and Gaussian kernel to the training set to select a
model with the maximum AUC on the validation set.  For all kernels,
the grid of cost parameters was
$C\in\{10^{-20},10^{-18}, \dots,10^0\}$. The Gaussian kernel width was
$10^{-1},10^{0},10^1$ and the polynomial kernel degree was
$1,2,3,4$.  

We then used the selected hyper-parameters to train a
model using the first four months (Jan--Apr) of each year, and we used
the learned model to predict on the test set for that year
(May--Dec). We then computed the test AUC for each of the eight years. Since there are 3 labels $y\in\{-1,0,1\}$ corresponding to $\{$win, draw, loss$\}$ respectively, the trivial AUC is non-standard (not 0.5). In our trivial calculation, we obtain a FPR and TPR of 0 by predicting every observation as a negative class ($y=0$).  By predicting every observation as the most common positive class, we have obtained the maximum TPR without use any features.  Our predictions for the trivial model are therefore calculated with:
\begin{equation}c_{\tau}(x)=
\begin{cases} 1 &
\text{ if } |
\mathcal I_{-1}|<|\mathcal I_1| \\ -1
& \text{ otherwise.} 

\end{cases}
\end{equation}
  We then obtain the FPR given all observations are predicted to be the most common positive prediction.  Since we have 2 distinct positive classes, it is highly unlikely that our TPR and FPR will be 1.0 through this method.

\subsection{Results and discussion}

As shown in Figure~\ref{fig:chess}, the linear and polynomial SVM kernels have
higher test AUC than the ELO and Glicko scoring systems.
Additionally, linear and polynomial SVM models trained using only ELO
and Glicko features perform worse than models using all features.  This suggests that the additional features computed in our model were
relevant in learning models that performed better than ELO scores.  Models using a combination of ELO and Glicko features obtain an AUC that appears to be the median of the Glicko and ELO AUC values.  The model appears to be learning a combination of these features and therefore obtains an AUC intermediate of ELO and Glicko scores only.  The Gaussian kernel also preforms with a lower AUC with respect to the linear and polynomial kernels.  The fact that the linear kernel is more accurate than the Gaussian kernel suggests that the pattern is linear in the features we have computed.

Overall, our analysis of the chess match data suggests that the
proposed SVMcompare model performs with a higher AUC than the existing
state-of-the-art ELO and Glicko results.

\begin{figure}
	\centering
	\input{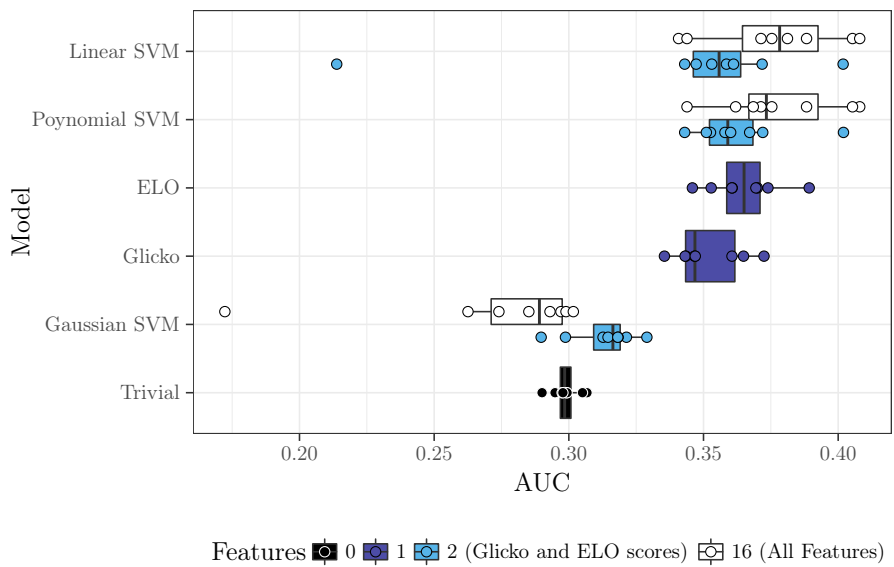}	
	\caption{Test AUC for each model used after training on the first 4 months of match data in the 8 different 12-month periods.  All SVM model AUCs are shown in addition to AUC of the ELO, Glicko scores and trivial model.  The plots in black show the trivial AUC calculated by predicting the most common positive label.  The plots in dark blue are AUC values obtained from using Glicko or ELO scores only.  The plots in light blue show the AUC distribution from models using only ELO and Glicko features and plots in white are AUC values from models using all computed features.}
	\label{fig:chess}
\end{figure}

\section{Conclusions and future work}
\label{sec:conclusions}

We discussed the learning to compare problem, which has not yet been
extensively studied in the machine learning literature. In
Section~\ref{sec:lp-qp}, we proposed two different formulations for
max-margin comparison, and proved their relationship in
Lemma~\ref{lemma:feasible}. It justifies our proposed SVMcompare
algorithm, which uses a binary SVM dual QP solver to learn a nonlinear
comparison function. \changed{In future work it will be interesting to explore
the learning capability of a kernelized version of the LP
(\ref{eq:max-margin-lp}) with slack variables added to the objective
function.}

Our experimental results on simulated and real data clearly showed the
importance of directly modeling the equality pairs, when they are
present. We showed in Figure~\ref{fig:auc} that when there are few
equality pairs, as is the usual setup in learning to rank problems,
the baseline SVMrank algorithm performs as well as our proposed
SVMcompare algorithm. However, when there are many equality pairs, it
is clearly advantageous to use a model such as SVMcompare which
directly learns from the equality pairs.

Out results also indicate that the proposed model performs better for
predicting outcomes of chess matches than the current FIDE ELO player
ranking system, and that incorporating additional features into out
model gives an increase in accuracy.

For future work, it will be interesting to see if the same results are
observed in learning to rank data from search engines. For scaling to
these very large data sets, we would like to try algorithms based on
smooth discriminative loss functions, such as stochastic gradient
descent with a logistic loss.

\section*{Acknowledgements}
DV was funded by an NSERC operating grant and the DeepMind Graduate Award. TDH was funded by KAKENHI 23120004, SS by a
MEXT scholarship, and MS by KAKENHI 17H01760. 
Thanks to Simon Lacoste-Julien and Hang Li for helpful discussions.

\section*{Reproducible Research Statement}
The informations and scripts used to produce the results in this paper are at :

https://github.com/tdhock/compare-paper


\bibliographystyle{agsm}
\bibliography{refs}

\end{document}